\ifcvprfinal\pagestyle{empty}\fi
\DeclareMathOperator*{\plim}{plim}
\begin{document}

%%%%%%%%% TITLE
\title{Learning Weighted Submanifolds with Variational Autoencoders and Riemannian Variational Autoencoders}

\author{Nina Miolane\\
Stanford University\\
{\tt\small nmiolane@stanford.edu}
% For a paper whose authors are all at the same institution,
% omit the following lines up until the closing ``}''.
% Additional authors and addresses can be added with ``\and'',
% just like the second author.
% To save space, use either the email address or home page, not both
\and
Susan Holmes\\
Stanford University\\
{\tt\small susan@stat.stanford.edu}
}

\maketitle
%\thispagestyle{empty}

% TODO:
% gvae_tgt n=10k curving the wrong way?
% squared_dist/w2 courbes are weird
% train with good sigma in the decoder / with hyperparameter sigma
% fix proof
% more experiments?
% sq w2 for comparison experiments
% how many units needed to reach a certain curvature?
% corrections?
% balls on H2

% Add visualization inspection for rVAE and 1D example

\begin{abstract}

Manifold-valued data naturally arises in medical imaging. In cognitive neuroscience, for instance, brain connectomes base the analysis of coactivation patterns between different brain regions on the analysis of the correlations of their functional Magnetic Resonance Imaging (fMRI) time series – an object thus constrained by construction to belong to the manifold of symmetric positive definite matrices. One of the challenges that naturally arises in these studies consists of finding a lower-dimensional subspace for representing such manifold-valued and typically high-dimensional data. Traditional techniques, like principal component analysis, are ill-adapted to tackle non-Euclidean spaces and may fail to achieve a lower-dimensional representation of the data – thus potentially pointing to the absence of lower-dimensional representation of the data. However, these techniques are restricted in that: (i) they do not leverage the assumption that the connectomes belong on a pre-specified manifold, therefore discarding information; (ii) they can only fit a linear subspace to the data. In this paper, we are interested in variants to learn potentially highly curved submanifolds of manifold-valued data. Motivated by the brain connectomes example, we investigate a latent variable generative model, which has the added benefit of providing us with uncertainty estimates – a crucial quantity in the medical applications we are considering. While latent variable models have been proposed to learn linear and nonlinear spaces for Euclidean data, or geodesic subspaces for manifold data, no intrinsic latent variable model exists to learn nongeodesic subspaces for manifold data. This paper fills this gap and formulates a Riemannian variational autoencoder with an intrinsic generative model of manifold-valued data. We evaluate its performances on synthetic and real datasets by introducing the formalism of weighted Riemannian submanifolds.
\end{abstract}

\section{Introduction}

Representation learning aims to transform data $x$ into a lower-dimensional variable $z$ designed to be more efficient for any downstream machine learning task, such as exploratory analysis of clustering, among others. In this paper, we focus on representation learning for manifold-valued data that naturally arise in medical imaging. Functional Magnetic Resonance Imaging (fMRI) data are often summarized into ``brain connectomes", that capture the coactivation of brain regions of subjects performing a given task (memorization, image recognition, or mixed gamble task, for example). As correlation matrices, connectomes belong to the cone of symmetric positive definite (SPD) matrices. This cone can naturally be equipped with a Riemannian manifold structure, which has shown to improve performances on classification tasks \cite{Barachant2013ClassificationApplications}. Being able to learn low-dimensional representations of connectomes within the pre-specified SPD manifold is key to model the intrinsic variability across subjects, and tackle the question: do brain connectomes from different subjects form a lower-dimensional subspace within the manifold of correlation matrices? If so, each subject's connectome $x$ can be represented by a latent variable $z$ of lower dimension. Anticipating potential downstream medical tasks that predict behavioral variables (such as measures of cognitive, emotional, or sensory processes) from $z$, we seek a measure of uncertainty associated with $z$. In other words, we are interested in a posterior in $z$ given $x$.

While the literature for generative models capturing lower-dimensional representations of Euclidean data is rich, such methods are typically ill-suited to the analysis of manifold-valued data. Can we yet conclude that lower-dimensional representations within these manifolds are not achievable? The aforementioned techniques are indeed restricted in that: either (i) they do not leverage any geometric knowledge as to the known manifold to which the data, such as the connectomes, belong; or (ii) they can only fit a linear (or geodesic, \textit{i.e.} the manifold equivalent of linear) subspace to the data. In this paper, we focus on alternatives with a latent variable generative model that address (i) and (ii).

\subsection{Related Work}

There is a rich body of literature on manifold learning methods. We review here a few of them, which we evaluate based on the following desiderata:
\begin{itemize}
    \item Is the method applicable to manifold-valued data?
    \item For methods on Euclidean data: does the method learn a linear or a nonlinear manifold, see Figure~\ref{fig:submanifolds} (a, b)? 
    \item For methods geared towards Riemannian manifolds: does the method learn a geodesic (\textit{i.e.} the manifold equivalent of a linear subspace) - or a nongeodesic subspace, see Figure~\ref{fig:submanifolds} (c, d)?
    \item Does the method come with a latent variable generative model?
\end{itemize}

\begin{figure}[h!]
\centering
\def\svgwidth{1.05\columnwidth}
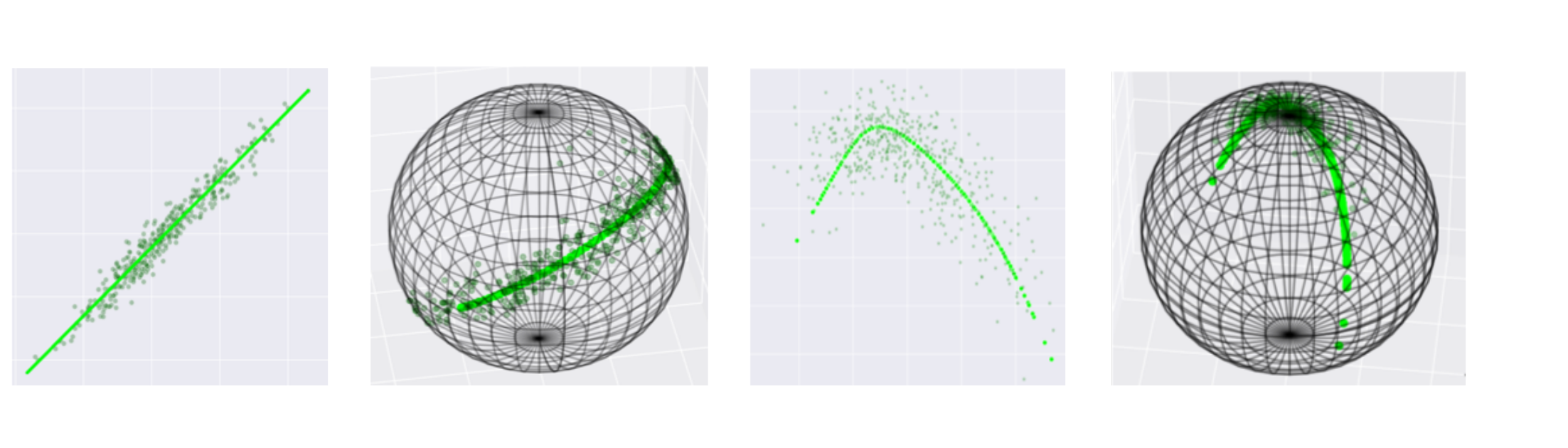
\caption{(a) Learning a 1D linear subspace in a 2D Euclidean space; (b) Learning a geodesic in a 2D manifold (sphere); (c) Learning a 1D nonlinear subspace in a 2D Euclidean space; (d) Learning a nongeodesic 1D subspace in a 2D manifold (sphere).\label{fig:submanifolds}}
\end{figure}

% (iii) do they use an isometric or a conformal embedding, 

\subsubsection{Learning Linear and Geodesic Subspaces} 

Principal Component Analysis (PCA) \cite{Pearson1901Space} learns a linear subspace,  while Probabilistic PCA (PPCA) and Factor Analysis (FA) \cite{Tipping1999ProbabilisticAnalysis} achieve the same goal within a probabilistic framework relying on a latent variable generative mode; see Figure~\ref{fig:submanifolds} (a). These techniques are based on vector space's operations that make them unsuitable for data on manifolds. As a consequence, researchers have developed methods for manifold-valued data, which take into account the geometric structure; see Figure~\ref{fig:submanifolds} (b). 

Principal Geodesic Analysis (PGA) \cite{Fletcher2004, Sommer2014OptimizationAnalysis}, tangent PGA (tPGA) \cite{Fletcher2004}, Geodesic Principal Component Analysis (gPCA) \cite{Huckemann2010IntrinsicActions}, principal flows \cite{Panaretos2014PrincipalFlows}, barycentric subspaces (BS) \cite{Pennec2018BarycentricManifolds} learn variants of ``geodesic" subspaces, \textit{i.e.} generalizations in manifolds of linear spaces in Euclidean spaces. Probabilistic PGA \cite{Zhang2013ProbabilisticAnalysis} achieves the same goal, while adding a latent variable model generating data on a manifold. 

However, these methods are restricted in the type of submanifold that can be fitted to the data, either linear or geodesic - a generalization of linear subspaces to manifolds. This restriction can be considered both a strength and a weakness. While it protects from overfitting with a submanifold that is too flexible, it also prevents the method from capturing possibly nonlinear effects. With current dataset sizes exploding (even within biomedical imaging datasets which have been historically much smaller), it seems that the investigation of flexible submanifold learning techniques takes on crucial importance.

\subsubsection{Learning Non-Linear and Nongeodesic Subspaces} 

While methods for learning nonlinear manifolds from Euclidean data are numerous (see Figure~\ref{fig:submanifolds} (c)), those providing a latent variable generative models are scarce. Kernel PCA \cite{Schoelkopf1998NonlinearProblem}, multi-dimensional scaling and its variants \cite{Cox2000MultidimensionalScaling, Bronstein2006GeneralizedMatching}, Isomap \cite{Tenenbaum2000}, Local Linear Embedding (LLE) \cite{Roweis2000NonlinearEmbedding}, Laplacian eigenmaps \cite{Belkin2003LaplacianRepresentation}, Hessian LLE \cite{Donoho2003HessianData}, Maximum variance unfolding \cite{Weinberger2006AnUnfolding}, and others, learn lower-dimensional representations of data but do not provide a latent variable generative model, nor a parameterization of the recovered subspace.
 
In contrast, principal curves and surfaces (PS) \cite{Hastie1989PrincipalCurves} and autoencoders fit a nonlinear manifold to the data, with an explicit parameterization of this manifold. However, this framework is not directly transferable to non-Euclidean data and has been more recently generalized to principal curves on Riemannian manifolds \cite{Hauberg2016PrincipalManifolds}. To our knowledge, this is the only method for nongeodesic submanifold learning on Riemannian manifolds (see Figure~\ref{fig:submanifolds} (d)). A probabilistic approach to principal curves was developed in \cite{Chang2001ASurfaces} for the Euclidean case, but not the manifold case. Similarly, variational autoencoders (VAEs) \cite{Kingma2014Auto-EncodingBayes} were developed to provide a latent variable generative model for autoencoders. However, they do not apply to manifold-valued data.

In order to create a latent variable generative model for manifold-valued data, we can either generalize principal curves on manifolds by adding a generative model or generalize VAEs for manifold-valued data. Principal curves require a parameterization of the curve that involves a discrete set of points. As the number of points needed grows exponentially with the dimension of the estimated surface, scaling this method to high dimensional principal surfaces becomes more difficult. As a consequence, we choose to generalize VAEs to manifold-valued data. This paper introduces Riemannian VAE, an intrinsic method that provides a flexible generative model of the data on a pre-specified manifold. We emphasize that our method does not amount to embedding the manifold in a larger Euclidean space, training the VAE, and projecting back onto the original manifold - a strategy that does not come with an intrinsic generative model of the data. We implement and compare both methods in Section~\ref{sec:comp}.

\subsection{Contribution and Outline}

This paper introduces the intrinsic Riemannian VAE, a submanifold learning technique for manifold-valued data. After briefly reviewing the (Euclidean) VAE, we present our Riemannian generalization. We show how Riemannian VAEs generalize both VAE and Probabilistic Principal Geodesic Analysis. We provide theoretical results describing the family of submanifolds that can be learned by the Riemannian method. To do so, we introduce the formalism of weighted Riemannian submanifolds and associated Wasserstein distances. This formalism also allows giving a sense to the definition of consistency in the context of submanifold learning. We use this to study the properties of VAE and Riemannian VAE learning techniques, on theoretical examples and synthetic datasets. Lastly, we deploy our method on real data by applying it to the analysis of connectome data.

%Then, we perform a geometric analysis of both VAEs and Riemannian VAEs in terms of weighted submanifold learning. We show that the algorithms are not consistent in the sense that the learned submanifold does not converge to the true submanifold that it is designed to estimate, in the sense of the 2-Wasserstein distance on the weighted submanifolds, when the sample size goes to infinity. We illustrate this effect on a 1D theoretical example and synthetic 2D datasets while offering a correction method. Then, we compare the accuracy of the Riemannian VAE with other submanifold learning methods, and also non-intrinsic ones. We show that the geometric prior of the Riemannian VAE allows it to outperform other methods. Lastly, we apply the Riemannian VAE on a real dataset of resting-state brain connectomes and show that - as opposed to linear or geodesic methods - we can detect a submanifold of the data.

\section{Riemannian Variational Autoencoders (rVAE)}

\subsection{Review of (Euclidean) VAE}

% ADD model of PPGA

We begin by setting the basis for variational autoencoders (VAEs) \cite{Kingma2014Auto-EncodingBayes, Rezende2014StochasticModels}. Consider a dataset $x_1, ..., x_n \in \mathbb{R}^D$. A VAE models each data point $x_i$ as the realization of a random variable $X_i$ generated from a nonlinear probabilistic model with lower-dimensional latent variable $Z_i$ taking value in $\mathbb{R}^L$, where $L < D$, such as:
\begin{equation}
    X_i = f_\theta(Z_i) + \epsilon_i,
\end{equation}
where $Z_i  \sim N(0, \mathbb{I}_L)$ \textit{i.i.d.} and $\epsilon_i$ represents \textit{i.i.d.} measurement noise distributed as $\epsilon_i \sim N(0, \sigma^2 \mathbb{I}_D)$. The function $f_\theta$ belongs to a family $\mathcal{F}$ of nonlinear generative models parameterized by $\theta$, and is typically represented by a neural network, called the decoder, such that: $f_\theta(\bullet) = \Pi_{k=1}^K g(w_k \bullet + b_k)$ where $\Pi$ represents the composition of functions, $K$ the number of layers, $g$ an activation function, and the $w_k, b_k$ are the weights and biases of the layers. We write: $\theta = \{w_k, b_k\}_{k=1}^K$. This model is illustrated on Figure~\ref{fig:vaemodel}.

%We note that we have the universality of neural networks. In principle, if the generator is flexible enough, it can approximate any function \cite{}.

The VAE pursues a double objective: (i) it learns the parameters $\theta$ of the generative model of the data; and (ii) it learns an approximation $q_\phi(z|x)$, within a variational family $\qdists$ parameterized by $\phi$, of the posterior distribution of the latent variables. The class of the generative model $\mathcal{F}$ and the variational family $\qdists$ are typically fixed, as part of the design of the VAE architecture. The VAE achieves its objective by maximizing the evidence lower bound (ELBO) defined as:
\begin{align}
     \elbo 
     & = \mathbb{E}_{q_\phi(z|x)} \left[ \log \frac{\joint}{q_\phi(z|x)} \right]
\end{align}
which can conveniently be rewritten as:
\begin{align*}
    \elbo 
    &= \logl - \text{KL} \left( q_\phi(z|x) \parallel \post \right)\\
    &= \mathbb{E}_{q_\phi(z)} \left[ \log p_\theta(x|z) \right] - \text{KL} \left( q_\phi(z|x) \parallel \prior \right)\\
    &=\mathcal{L}_{\text{rec}}(x, \theta, \phi) + \mathcal{L}_{\text{reg}}(x, \phi),
\end{align*}
where the terms $\mathcal{L}_{\text{rec}}(x, \theta, \phi)$ and $\mathcal{L}_{\text{reg}}(x, \phi)$ are respectively interpreted as a reconstruction objective and as a regularizer to the prior on the latent variables.

From a geometric perspective, the VAE learns a manifold $\hat N = N_{\hat{\theta}} = f_{\hat{\theta}} (\mathbb{R}^L)$ designed to estimate the true submanifold of the data $N_\theta = f_\theta (\mathbb{R}^L)$. The approximate distribution $q_\phi(z|x)$ can be seen as a (non-orthogonal) projection of $x$ on the subspace $N_{\hat \theta}$ with associated uncertainty.

%The learned manifold $\hat N$ inherits differential and geometric structure from $\R^D$ \cite{2, 4, 16}.

%In these approaches the underlying geometry is parameterized as a Euclidean vector space. The training data is given as a set of vectors x1, . . . , xN in a vector space V, and the dissimilarity between the data is either described by a l2 loss or squared euclidean distance, or the cross-entropy which a generalized notion of distances between probability distributions that can be applied to images with intensity scaled between 0 and 1.

\begin{figure}[h]
\centering
\begin{center}
\includegraphics[page=2,width=16cm]{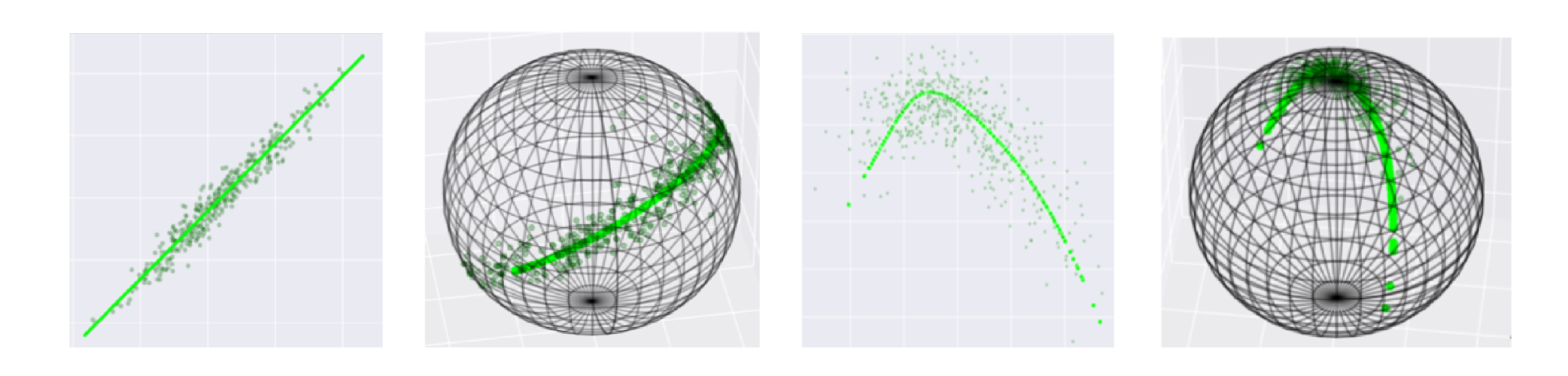}
\end{center}
\caption{Generative model for the variational autoencoder with latent space $\mathbb{R}^L$ and data space $\mathbb{R}^D$. The latent variable $z_i$ is sampled from a standard multivariate normal distribution on $\mathbb{R}^L$ and embedded into $\mathbb{R}^D$ through the embedding $f_\theta$. The data $x_i$ is generated by addition of a multivariate isotropic Gaussian noise in $\mathbb{R}^D$.\label{fig:vaemodel}}
\end{figure}

\subsection{Riemannian VAE (rVAE)}

We generalize the generative model of VAE for a dataset $x_1, . . . , x_n$ on a Riemannian manifold $M$. We need to adapt two aspects of the (Euclidean) VAE: the embedding function $f_\theta$ parameterizing the submanifold, and the noise model on the manifold $M$. We refer to supplementary materials for details on Riemannian geometry, specifically the notions of Exponential map, Riemannian distance and Fr\'echet mean.

\subsubsection{Embedding}

Let $\mu \in M$ be a base point on the manifold. We consider the family of functions $f_\theta : \mathbb{R}^L \mapsto \mathbb{R}^D \simeq T_\mu M$ that are parameterized by a fully connected neural network of parameter $\theta$, as in the VAE model. We define a new family of functions with values on $M$, by considering: $f^M_{\mu, \theta}(\bullet) = \text{Exp}^M(\mu, f_\theta(\bullet))$ as a embedding from $\mathbb{R}^L$ to $\mathbb{R}^D$, where $\text{Exp}^M(\mu, \bullet)$ is the Riemannian exponential map of $M$ at $\mu$.

\subsubsection{Noise model}

We generalize the Gaussian distribution from the VAE generative model, as we require a notion of distribution on manifolds. There exist several generalizations of the Gaussian distribution on Riemannian manifolds \cite{Pennec2006}. To have a tractable expression to incorporate into our loss functions, we consider the minimization of entropy characterization of \cite{Pennec2006}:
\begin{equation}
    p(x|\mu, \sigma) = \frac{1}{C(\mu, \sigma)}\exp \left( - \frac{d(\mu, x)^2}{2\sigma^2}\right),
\end{equation}
where $C(\mu, \sigma)$ is a normalization constant:
\begin{equation}
    C(\mu, \sigma) = \int_M  \exp \left( - \frac{d(\mu, x)^2}{2\sigma^2}\right)dM(x),
\end{equation}
and $dM(x)$ refers to the volume element of the manifold $M$ at $x$. We call this distribution an (isotropic) Riemannian Gaussian distribution, and use the notation $x \sim N^M(\mu, \sigma^2\mathbb{I}_D)$. We note that this noise model could be replaced with a different distribution on the manifold $M$, for example a generalization of a non-isotropic Gaussian noise on $M$.

\subsubsection{Generative model}

We introduce the generative model of Riemannian VAE (rVAE) for a dataset $x_1, ..., x_n$ on a Riemannian manifold $M$:
\begin{equation}\label{model:gvae}
    X_i | Z_i = N^M \left(\text{Exp}^M(\mu, f_\theta(Z_i)), \sigma^2 \right) \text{ and }  Z_i \sim N(0, \mathbb{I}_L),
\end{equation}
where $f_\theta$ is represented by a neural network and allows to represent possibly highly ``non-geodesic" submanifolds. This model is illustrated on Figure~\ref{fig:gvaemodel}. 

From a geometric perspective, fitting this model learns a submanifold $N_{\hat{\theta}} = \text{Exp}^M(\mu, f_{\hat{\theta}} (\mathbb{R}^L))$ designed to estimate the true $N_{\theta} = \text{Exp}^M(\mu, f_{\theta} (\mathbb{R}^L))$ in the manifold $M$. The approximate distribution $q_\phi(z|x)$ can be seen as a (non-orthogonal) projection of $x$ on the submanifold $N_{\hat \theta}$ with associated uncertainty.

\begin{figure}[h]
\centering
\begin{center}
\includegraphics[page=3,width=16cm]{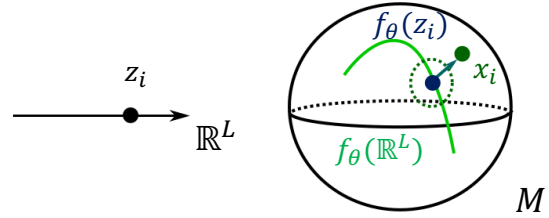}
\end{center}
\caption{Generative model for the Riemannian variational autoencoder with latent space $\mathbb{R}^L$ and data space $M$. The latent variable $z_i$ is sampled from a standard multivariate normal distribution on $\mathbb{R}^L$ and embedded into $M$ through the embedding $f_{\mu, \theta}$. The data $x_i$ is generated by addition of a Riemannian multivariate isotropic Gaussian noise in $M$.\label{fig:gvaemodel}}
\end{figure}

\subsubsection{Link to VAE and PPGA}

The rVAE model is a natural extension of both the VAE and the Probabilistic PGA (PPGA) models. We recall that, for $M = \mathbb{R}^D$, the Exponential map is an addition operation, $\text{Exp}^{\mathbb{R}^D}(\mu, y) = \mu + y$. Furthermore, the Riemannian Gaussian distribution reduces to a multivariate Gaussian $N^{\mathbb{R}^D}(\mu, \sigma^2\mathbb{I}_D)=N(\mu, \sigma^2\mathbb{I}_D)$. Thus, the Riemannian VAE model coincides with the VAE model when $M = \mathbb{R}^D$. Furthermore, the Riemannian VAE model coincides with the model of PPGA: 
\begin{equation}\label{eq:ppga}
    X_i|Z_i \sim N^M \left(\text{Exp}^M(\mu, WZ_i), \sigma^2\right) \text{ and } Z_i \sim N(0, \mathbb{I}_L),
\end{equation}
when the decoder is a linear neural network: $f_\theta(z) = Wz$ for $z \in \mathbb{R}^L$. 

Inference in PPGA was originally introduced with a Monte Carlo Expectation Maximization (MCEM) scheme in \cite{Zhang2013ProbabilisticAnalysis}. In contrast, our approach fits the PPGA model with variational inference, as we will see in Section~\ref{sec:fit}. Variational inference methods being less accurate but faster than Monte-Carlo approaches, our training procedure represents an improvement in speed to the PPGA original inference method, at the cost of some accuracy.

\section{Expressiveness of rVAE}\label{sec:approx}

The Riemannian VAE model parameterizes an embedded submanifold $N$ defined by a smooth embedding $f_\theta^M$ as:
\begin{equation}
N = f^M_\theta(\mathbb{R}^L) =\text{Exp}^M(\mu, f_\theta(\mathbb{R}^L)),
\end{equation}
where $f_\theta$ is the function represented by the neural net, with a smooth activation function, and the parameter $\mu$ is absorbed in the notation $\theta$ in $f^M_\theta$. The flexibility in the nonlinear function $f_\theta$ allows rVAE to parameterize embedded manifolds that are not necessarily geodesic at a point. A question that naturally arises is the following: can rVAE represent any smooth embedded submanifold $N$ of $M$? We give results, relying on the universality approximation theorems of neural networks, that describe the embedded submanifolds that can be represented with rVAE.

\subsection{Weighted Riemannian submanifolds}

We introduce the notion of weighted submanifolds and suggest the associated formalism of Wasserstein distances to analyze dissimilarities between general submanifolds of $M$ and submanifolds of $M$ parameterized by rVAE.

\begin{definition}[Weighted (sub)manifold]
Given a complete $N$-dimensional Riemannian manifold $(N, g^N)$ and a smooth probability distribution $\omega : N \rightarrow \mathbb{R}$, the weighted manifold $(N, \omega)$ associated to $N$ and $\omega$ is defined as the triplet:
\begin{equation}
    (M, g^N, d\nu = \omega. dN),
\end{equation}
where $dN$ denotes the Riemannian volume element of $N$.
\end{definition}

The Riemannian VAE framework parameterizes weighted submanifold defined by:
\begin{equation}
    N_\theta: (f^M_\theta(\mathbb{R}^L), g_M, f^M_\theta \ast N(0, \mathbb{I}_L)),
\end{equation}
so that the submanifold $N_\theta$ is modeled as a singular (in the sense of the Riemannian measure of $M$) probability density distribution with itself as support. The distribution is associated with the embedding of the standard multivariate Gaussian random variable $Z \sim N(0, \mathbb{I}_L)$ in $M$ through $f^M_\theta$. 

\subsection{Wasserstein distance on weighted submanifolds}

We can measure distances between weighted submanifolds through the Wasserstein distances associated with their distributions.

\begin{definition}[Wasserstein distance]
The 2-Wasserstein distance between probability measures $\nu_1$ and $\nu_2$ defined on $M$, is defined as:
\begin{equation}
    d_2(\nu_1, \nu_2) = 
    \left( 
    \inf_{\gamma \in \Gamma(\mu, \nu)} 
    \int_{M\times M} d_M(x_1, x_2)^2d\gamma(z_1, z_2)
    \right)^{1/2}
\end{equation}
where $\Gamma (\nu_1 , \nu_2)$ denotes the collection of all measures on $M \times M$ with marginals $\nu_1$ and $\nu_2$ on the first and second factors respectively.
\end{definition}

Wasserstein distances have been introduced previously in the context of variational autoencoders with a different purpose: \cite{Tolstikhin2018WassersteinAuto-Encoders} use the Wasserstein distance with any cost function between the observed data distribution and the learned distribution, penalized with a regularization term, to train the neural network. In contrast, we use the Wasserstein distance with the square of the Riemannian distance as the cost function to evaluate distances between submanifolds. Therefore, we evaluate a distance between the data distribution and the learned distribution before the addition of the Gaussian noise. We do not use this distance to train any model; we only use it as a performance measure.

\subsection{Weighted submanifold approximation result}

The following result describes the expressiveness of rVAEs.

\begin{proposition}
Let $(N, \nu)$ be a weighted Riemannian submanifold of $M$, embedded in a submanifold of $M$ homeomorphic to $\mathbb{R}^L$ for which there exists an embedding $f$ that verifies: $\nu = f \ast \mu_T$ where $\mu_T$ is a truncated standard multivariate normal on $\mathbb{R}^L$. Let assume the existence of $\mu \in M$ such that $N \subset V(\mu)$, where $V(\mu)$ is the domain of bijection of the Riemannian exponential of $M$ at $\mu$. Then, for any $0 < \epsilon < 1$, there exists a Riemannian VAE with decoder $f_\theta$ parameterized by $\theta$ such that:
\begin{equation}
    d_{2}(N, N_\theta) < \epsilon
\end{equation}
where $d_2$ is the 2-Wasserstein distance for the weighted submanifolds.% $(N=f(\mathbb{R}^L), f \ast \mu_T)$ and $(N_\theta=f_\theta(\mathbb{R}^L), f_\theta \ast \mu_T)$.
\end{proposition}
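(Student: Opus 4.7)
The plan is to reduce the problem to a universal approximation problem in a single tangent space, using the hypothesis that $N$ lies in the maximal domain of bijectivity of the exponential map at $O=f(0)$.

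First, by assumption (iii), $N \subset V(O)$, so $\text{Log}_O$ is well-defined on $N$. Consider the continuous map
\begin{equation}
\tilde f = \text{Log}_O \circ f : \mathbb{R}^L \to T_O M \cong \mathbb{R}^D.
\end{equation}
Restricted to the compact ball $B_T$, the image $\tilde f(B_T) = \text{Log}_O(f(B_T))$ is a compact subset of the open set $U = \text{Log}_O(V(O)) \subset T_OM$. Choose a slightly larger compact neighborhood $K \subset U$ of $\tilde f(B_T)$; since $\text{Exp}_O : U \to V(O)$ is a diffeomorphism, it is $L$-Lipschitz on $K$ for some finite constant $L$ (with respect to the Euclidean norm on $T_OM$ and the Riemannian distance $d_M$).

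Second, apply the universal approximation theorem (for continuous functions on compact sets) to $\tilde f|_{B_T}$: for any $\delta > 0$, there exists a neural network $g_\theta : \mathbb{R}^L \to \mathbb{R}^D$ with $d_\infty(\tilde f, g_\theta) < \delta$ on $B_T$. For $\delta$ chosen small enough, $g_\theta(B_T) \subset K$. Define the VAE decoder
\begin{equation}
f_\theta = \text{Exp}_O \circ g_\theta,
\end{equation}
which is a valid Riemannian VAE decoder mapping into $M$, and set $N_\theta = f_\theta(\mathbb{R}^L)$, $\nu_\theta = f_{\theta\ast}\mu_T$.

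Third, bound the 2-Wasserstein distance. Using the coupling $\gamma_\theta = (f, f_\theta)_\ast \mu_T \in \Gamma(\nu, \nu_\theta)$, and the Lipschitz bound above,
\begin{equation}
d_2^2(\nu, \nu_\theta) \le \int_{B_T} d_M(f(z), f_\theta(z))^2 \, d\mu_T(z) \le L^2 \delta^2,
\end{equation}
since for every $z \in B_T$,
\begin{equation}
d_M(f(z), f_\theta(z)) = d_M\bigl(\text{Exp}_O(\tilde f(z)), \text{Exp}_O(g_\theta(z))\bigr) \le L\,\|\tilde f(z) - g_\theta(z)\| \le L\delta.
\end{equation}
Choosing $\delta < \epsilon / L$ yields the claim.

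The main obstacle is the control of the Lipschitz constant of $\text{Exp}_O$: it is bounded only on compact subsets of the domain of bijectivity $U$, so one must argue carefully that a sufficiently accurate neural-network approximation $g_\theta$ keeps its image inside a fixed compact neighborhood $K$ of $\tilde f(B_T)$ in $U$ (a standard argument using the openness of $U$ and compactness of $\tilde f(B_T)$). Everything else is a direct application of universal approximation on a compact Euclidean domain plus the diffeomorphism guaranteed by hypothesis (iii).
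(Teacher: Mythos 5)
Your proof is correct and follows essentially the same route as the paper's: pull $f$ back to the tangent space via $\mathrm{Log}_O$, apply universal approximation on the compact ball $B_T$, push forward with $\mathrm{Exp}_O$, and bound the 2-Wasserstein distance using the diagonal coupling together with a Lipschitz bound for $\mathrm{Exp}_O$ on a compact neighborhood of $\tilde f(B_T)$. If anything, your handling of the compact set $K$ (openness of the bijectivity domain plus choosing $\delta$ small enough so that $g_\theta(B_T)\subset K$) and your direct use of the coupling $(f,f_\theta)_\ast\mu_T$ are cleaner than the paper's chart-based change of variables.
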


%Consider $\mu$ a probability measure on $\mathbb{R}^L$. We denote $L_{p}(\mu)$ the set of functions of $\mathbb{R}^L$ defined by: $\left{ f: \int_{\mathbb{R}^L} ||f(z)||_2^p d\mu(z) < +\infty \right}$. Consider $O \in M$, and denote $V(O)$ the largest neighborhood of $O$ onto which the exponential map $\text{Exp}_O$ is a diffeomorphic. 

\begin{proof}
The proof is provided in the supplementary materials.
\end{proof}

% Idea: introduce a layer before the 0th layer s.t. \mu has a very small variance and correct with this?
%http://www.yann-ollivier.org/rech/publs/visualcurvature.pdf
%https://www.maths.ed.ac.uk/~v1ranick/papers/leeriemm.pdf

% write the total Taylor expansion of the geodesic square distance

As Hadamard manifolds are homeomorphic to $\mathbb{R}^L$ through their Riemannian Exponential map, the assumption $N \subset V(\mu)$ is always verified in their case. This suggests that it can be better to equip a given manifold with a Riemannian metric with negative curvature. In the case of the space of SPD matrices in Section~\ref{sec:spd}, we choose a metric with negative curvature.

%\begin{corollary}\label{cor}
%Assume $M$ is a Hadamard manifold. Let $(N, \nu)$ be an embedded weighted Riemannian submanifold of $M$, such that there exists an embedding $f$ that verifies: (i) $N = f(\mathbb{R}^L)$, i.e. $N$ is homeomorphic to $\mathbb{R}^L$, (ii) $\nu = f \ast \mu_T$ where $\mu_T$ is the truncated standard multivariate normal. Then, for any $\epsilon > 0$, there exists a Riemannian VAE with decoder $f_\theta$ parameterized by $\theta$ that can represent $N$ with an arbitrary precision in the sense of $d_2$, the 2-Wasserstein distance.
%\end{corollary}

%It follows that, as soon as $N$ can be embedded in an $f(\mathbb{R}^d)$

%   TODO: Get rid of the cutlocus problem by considering compact manifolds?

\section{Learning and inference for rVAEs}\label{sec:fit}

We show how to train rVAE by performing learning and inference in model~(\ref{model:gvae}). 

\subsection{Riemannian ELBO}

As with VAE, we use stochastic gradient descent to maximize the ELBO:
\begin{align*}
    \elbo 
    & = \mathcal{L}_{\text{rec}}(x, \theta, \phi) + \mathcal{L}_{\text{reg}}(x, \phi) \\ 
    &= \mathbb{E}_{q_\phi(z)} \left[ \log p_\theta(x|z) \right] - \text{KL} \left( q_\phi(z|x) \parallel \prior \right)
    %= \mathcal{L}_1^{recon}(x, \theta, \phi) - \mathcal{L}_1^{regu}(x, \phi)
    %= \int_z \log p_\theta(x|z)q_\phi(z|x) dz - \int_z \log \frac{q_\phi(z|x)}{p(z)} q_\phi(z|x)dz
\end{align*}
where the reconstruction objective $\mathcal{L}_{\text{rec}}(x, \theta, \phi)$ and the regularizer $\mathcal{L}_{\text{reg}}(x, \phi)$ are expressed using probability densities from model~(\ref{model:gvae}), and a variational family chosen to be the multivariate Gaussian:
\begin{align*}
q_\phi(z|x) &= N(h_\phi(x), \sigma_\phi^2(x)),\\
\quad p(z) &= N(0, \mathbb{I}_L),\\
\quad p(x|z) &= N^M(\text{Exp}^M(\mu, f_\theta(Z_i)), \sigma^2\mathbb{I}_D ).
\end{align*}

The reconstruction term writes:
\begin{align*}
 \mathcal{L}_{\text{rec}}(x, \theta, \phi)
     &= \int_z \Bigl( - \log C(\sigma^2, r(\mu, z, \theta))\\
    &\qquad\qquad-  \frac{d_M(x, \text{Exp}(\mu, f_\theta(z)))^2}{2\sigma^2} \Bigr) q_\phi(z|x) dz,
\end{align*}
while the regularizer is:
\begin{align*}
 \mathcal{L}_{\text{reg}}(x, \phi)
    &= \int_z \log \frac{q_\phi(z|x)}{p(z)} q_\phi(z|x)dz\\
    &= \frac{1}{2} \sum_{l=1}^L \left( 
        1 + \log(\sigma_l^{(i)})^2 - (\mu_l^{(i)})^2 - (\sigma_l^{(i)})^2
        \right),
    %\simeq \frac{1}{M} \sum_{m=1}^M\left( - \log C(\sigma^2, r(Exp_O(f_\theta(z)))) -  \frac{d_M(x, Exp_O(f_\theta(z)))^2}{2\sigma^2} \right) \text{ for: } z_m \sim q_\phi(z)
\end{align*}
where $C$ is the normalization constant, that depends on $r(\mu, z, \theta)$, to the injectivity radius of the Exponential map at the point $\text{Exp}(\mu, f_\theta(z))$ \cite{Postnikov2001}. We note that, although in the initial formulation of the VAE, the $\sigma$ depends on $z$ and $\theta$ and should be estimated during training, the implementations usually fix it and estimate it separately. We perform the same strategy here. In practice, we use the package \texttt{geomstats} \cite{GEOMSTATS} to plug-in the manifold of our choice within the rVAE algorithm.

\subsection{Approximation}

To compute the ELBO, we need to perform an approximation as providing the exact value of the normalizing constant $C(\sigma^2, r(\mu, z, \theta))$ is not trivial. The constant $C$ depends on the $\sigma^2$ and the geometric properties of the manifold $M$, specifically the injectivity radius $r$ at $\mu$. 

For Hadamard manifolds, the injectivity radius is constant and equal to $\infty$, thus $C = C(\sigma)$ depends only on $\sigma$. As we do not train on $\sigma$, we can discard the constant $C$ in the loss function. For non-Hadamard manifolds, we consider the following approximation of the $C$, that is independent of the injectivity radius:
\begin{equation}
    C = \frac{1 + O(\sigma^3) + O(\sigma / r)}{\sqrt{(2\pi)^D\sigma^{2D}}}.
\end{equation}
This approximation is valid in regimes with $\sigma^2$ low in comparison to the injectivity radius, in other words, when the noise's standard deviation is small in comparison to the distance to the cut locus from each of the points on the submanifold. After this approximation, we can discard the constant $C$ from the ELBO as before.

%Thus, $C \simeq C(\sigma)$ also only depends on $\sigma$.

%In practice, we do not derivate with respect to $\sigma$ in this paper, so the approximated reconstruction term is:
%\begin{align*}
%\mathcal{L}^{rec}(x, \theta, \phi) 
%    &= - \int_z \frac{d_M(x, f_\theta(z))^2}{2\sigma^2} q_\phi(z|x) dz\\
%    &= -\frac{1}{M} \sum_{m=1}^M \frac{d_M(x, f_\theta(z_m))^2}{2\sigma^2} \quad z_m \sim %q_\phi(z|x)
%\end{align*}
%where the last step is a Monte-Carlo approximation. 

%(TODO: repamaterization trick).

\subsection{An important remark}

We highlight that our learning procedure does not boil down to projecting the manifold-valued data onto some tangent space of $M$ and subsequently applying a Euclidean VAE. Doing so would implicitly model the noise on the tangent space as a Euclidean Gaussian, as shown in the supplementary materials. Therefore, the noise would be modulated by the curvature of the manifold. We believe that this is an undesirable property, because it entangles the probability framework with the geometric prior, \textit{i.e.} the random effects with the underlying mathematical model. 

\section{Goodness of fit for submanifold learning}

We consider the goodness of fit of rVAEs (and VAEs) using the formalism of weighted submanifolds that we introduced in Section~\ref{sec:approx}. In other words, assuming that data truly belong to a submanifold $N_\theta = f_{\mu, \theta}(\mathbb{R}^L)$ and are generated with the rVAE model, we ask the question: how well does rVAE estimate the true submanifold, in the sense of the 2-Wasserstein distance? For simplicity, we consider that rVAE is trained with a latent space $\mathbb{R}^L$ of the true latent dimension $L$. Inspired by the literature of curve fitting \cite{Chernov2011}, we define the following notion of consistency for weighted submanifolds.

\begin{definition}[Statistical consistency]
We call the estimator $N_{\hat \theta}$ of $N_{\theta}$ statistically consistent if:
\begin{equation}
    \plim_{n \rightarrow +\infty} d_{W_2}(N_{\hat \theta} , N_{\theta} ) = 0.
\end{equation}
\end{definition}

Denoting $N_{\hat \theta}$ the submanifold learned by rVAE, we want to evaluate the function: $d(n, \sigma) = d_{W_2}(N_{\hat \theta} , N_{\theta})$, for different values of $n$ and $\sigma$, where $\hat \theta$ depends on $n$ and $\sigma$.

\subsection{Statistical inconsistency on an example}

We consider data generated with the model of probabilistic PCA (PPCA) with $\mu=0$ \cite{Tipping1999ProbabilisticAnalysis}, \textit{i.e.} a special case of a rVAE model:
\begin{equation}\label{eq:model1d}
    X_i = w Z_i + \epsilon_i 
\end{equation}
where: $w \in \mathbb{R}^{D\times L}$, $Z \sim N(0, \mathbb{I}_L)$ \textit{i.i.d.} and $\epsilon \sim N(0, \mathbb{I}_D)$ \textit{i.i.d.}. We train a rVAE, which is a VAE in this case, on data generated by this model. We chose a variational family of Gaussian distributions with variance equal to $1$. Obviously, this is not the learning procedure of choice in this situation. We use it to illustrate the behavior of rVAEs and VAEs.

The case $D=1$ and $L=1$ allows to perform all computations in closed forms (see supplementary materials). We compute the distance between the true and learned submanifold in terms of the 2-Wasserstein distance:
\begin{align*}
    d_2(\nu_{\theta}, \nu_{\hat \theta}) = w - \sqrt{\frac{\hat\sigma^2}{2} - 1} \rightarrow w - \sqrt{\frac{w^2 -1 }{2}} \neq 0 
\end{align*}
where $\hat \sigma^2$ is the sample variance of the $x_i$'s. We observe that the 2-Wasserstein distance does not converge to $0$ as $n \rightarrow + \infty$ if $w \neq 1$ or $-1$. This is an example of statistical inconsistency, in the sense that we defined in this section.

\subsection{Experimental study of inconsistency}

We further investigate the inconsistency with synthetic experiments and consider the following three Riemannian manifolds: the Euclidean space $\mathbb{R}^2$, the sphere $S^2$ and the hyperbolic plane $H_2$. The definitions of these manifolds are recalled in the supplementary materials. We consider three Riemmanian VAE generative models respectively on $\mathbb{R}^2$, $S^2$ and $H_2$, with functions $f_\theta$ that are implemented by a three layers fully connected neural network with softplus activation. Figure~\ref{fig:synth} shows synthetic samples of size $n=100$ generated from each of these models. The true weighted 1-dimensional submanifold corresponding to each model is shown in light green.

\begin{figure}[h!]
    \centering
    \includegraphics[height=2.6cm]{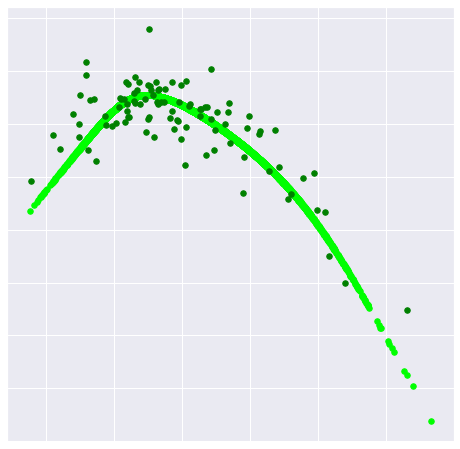}
    \includegraphics[height=2.6cm]{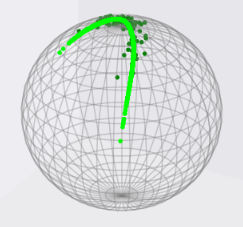}
    \includegraphics[height=2.6cm]{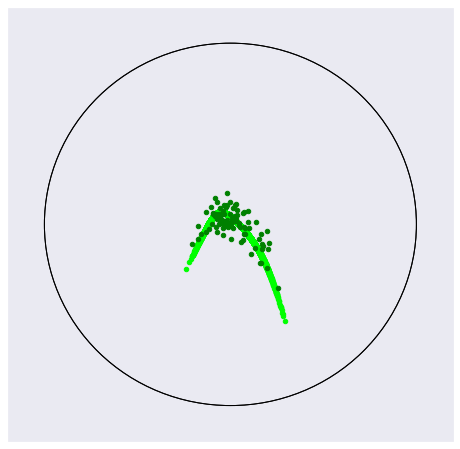}
    \caption{Synthetic data on the manifolds $\mathbb{R}^2$ (left), $S^2$ (center) and $H_2$ in its Poincar\'e disk representation (right). The light green represents the true weighted submanifold, the dark green points represents data points generated with rVAE.\label{fig:synth}}
\end{figure}

For each manifold, we generate a series of datasets with sample sizes $n \in \left\{10, 100\right\}$ and noise standard deviation such that $\log\sigma^2 \in \left\{-6, -5, -4, -3, -2 \right\}$. For each manifold and each dataset, we train a rVAE with the same architecture than the decoder that has generated the data, and standard deviation fixed to a constant value.

Figure~\ref{fig:goodness_gvae} shows the 2-Wasserstein distance between the true and the learned weighted submanifold in each case, as a function of $\sigma$, where different curves represent the two different values of $n$. These plots confirm the statistical inconsistency observed in the theoretical example. For $\sigma \neq 0$,the VAE and the rVAE do not converge to the submanifold that has generated the data as the sample size increases. This observation should be taken into consideration when these methods are used for manifold learning, \textit{i.e.} in a situation where the manifold itself is essential. Other situations that use these methods only as a way to obtain lower-dimensional codes may or may not be affected by this observation.

\begin{figure}[h!]
    \centering
    \includegraphics[width=8.2cm]{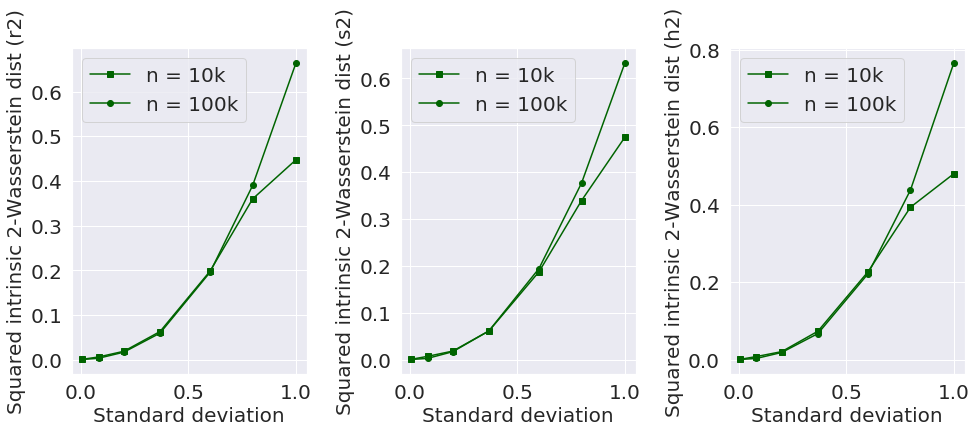}
    \caption{Goodness of fit for submanifold learning using the 2-Wasserstein distance.  First column: $\mathbb{R}^2$; Second column: $S^2$; Third column: $H_2$.}
    \label{fig:goodness_gvae}
\end{figure}
%Second line: using the negative ELBO.

Additionally, we observe that this statistical inconsistency translates into an asymptotic bias that leads rVAEs and VAEs to estimate flat submanifolds, see Figure~\ref{fig:flat}. We provide an interpretation to a statement in \cite{Shao2018TheModels}, where the authors compute the curvature of the submanifold learned with a VAE on MNIST data and observe a ``surprinsingly little" curvature. Our experiments indicate that the true submanifold possibly has some curvature, but that its estimation does not because of noise regime around the submanifold is ``too high". Interesting, this remark challenges the very assumption of the existence of a submanifold: if the noise around the manifold is large, does the manifold assumption still hold?

\begin{figure}[h!]
    \centering
    \includegraphics[height=2.6cm]{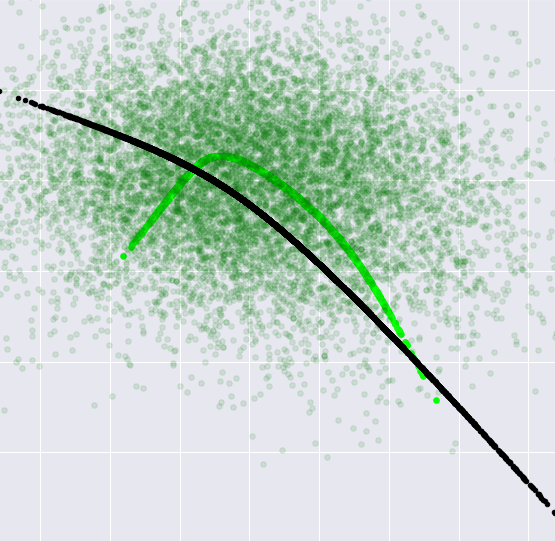}
    \includegraphics[height=2.6cm]{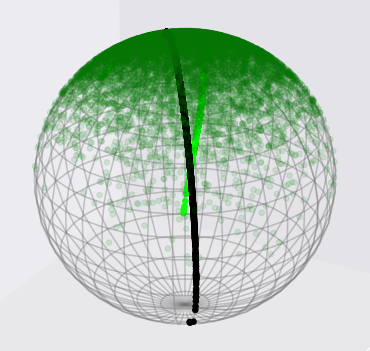}
    \includegraphics[height=2.6cm]{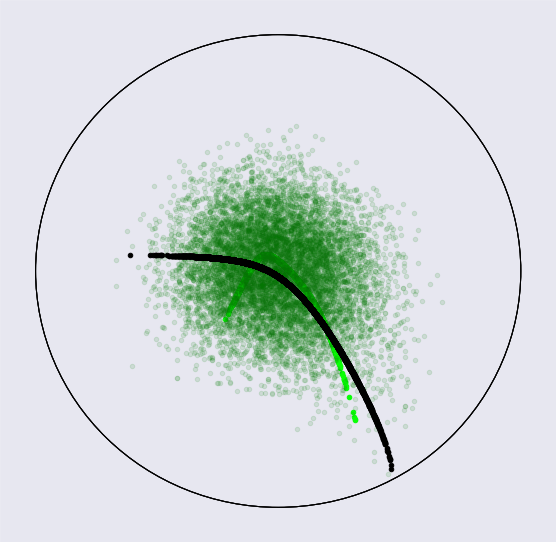}
    
    \caption{True submanifold (light green) and data points (green) generated for $n=10k$ and $\log \sigma^2 = -2$. Learned submanifold (black).  First column: $\mathbb{R}^2$; Second column: $S^2$; Third column: $H_2$ in its Poincar\'e disk representation.}
    \label{fig:flat}
\end{figure}

\section{Comparison of rVAE with submanifold learning methods}\label{sec:comp}

We perform experiments on simulated datasets to compare the following submanifold learning methods: PGA, VAE, rVAE, and VAE projected back on the pre-specified manifold. We generate datasets on the sphere using model~(\ref{model:gvae}) where the function $f_\theta$ is a fully connected neural network with two layers, and softplus nonlinearity. The latent space has dimension 1, and the inner layers have dimension 2. We consider different noise levels $\log \sigma^2 = \{-10, -2, -1, 0\}$ and sample sizes $n\in \{10k, 100k\}$.

We fit PGA using the tangent PCA approximation. The architecture of each variational autoencoder - VAE, rVAE and VAE projected - has the capability of recovering the true underlying submanifold correctly. Details on the architectures are provided in the supplementary materials. Figure~\ref{fig:goodness} shows the goodness of fit of each submanifold learning procedure, in terms of the extrinsic 2-Wasserstein distance in the ambient Euclidean space $\mathbb{R}^3$. The PGA is systematically off, as shown in the Figures from the supplementary materials, therefore we did not include it in this plot.

%The first is a variational autoencoder that uses the L2 distance in $\mathbb{R}^3$, where we have embedded the data points on $S^2$ and on the sphere $H^2$. The function $f_\theta$ representing the decoder is a fully connected network with 2 layers, and softplus nonlinearity. The latent space has dimension 1, the inner layers have dimension 3. The model is therefore more flexible that the one that generates the data; however it has the capability of recovering the data distribution exactly. The second is a variational autoencoder that uses the L2 distance in $\mathbb{R}^2$, the tangent space at a point $O$ chosen to be $(1, 0, 0)$ for the sphere and $(0, 0, 1)$ for the hyperbolic space. The function $f_\theta$ representing the decoder is a fully connected network with 2 layers, and softplus nonlinearity. The latent space has dimension 1, the inner layers have dimension 2. The model also has the capability to recover the exact distribution of the data. The third is a variational autoencoder that uses the Riemannian distance of $S^2$ and $H^2$ but expressed in the coordinates of $\mathbb{R}^2$, the tangent space at a point $O$ chosen to be $(1, 0, 0)$ for the sphere and $(0, 0, 1)$ for the hyperbolic space. The architecture is the same as for the second design. 

\begin{figure}
    \centering
    \includegraphics[width=8.2cm]{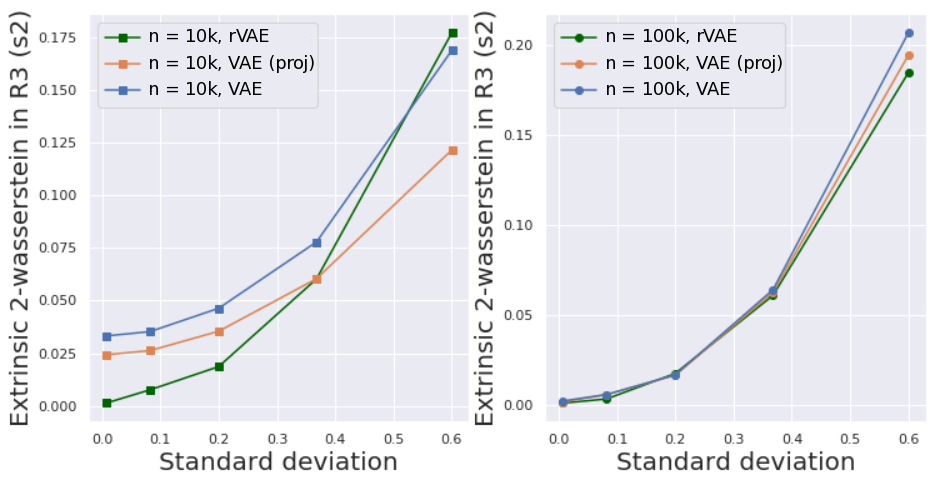}
    \caption{Quantitative comparison of the submanifold learning methods using the 2-Wasserstein distance in the embedding space $\mathbb{R}^3$. From left to right: ; quantitative comparison for $n=10k$ and different values of $\sigma$; quantitative comparison for $n=100k$ and different values of $\sigma$.}
    \label{fig:goodness}
\end{figure}

% TODO: Use the same samples to compute the distances !
% TODO: the embedded one is too expressive with the inner layers of dim 3: reduce. same for encoder? too powerful?

We observe that rVAE outperforms the other submanifold learning methods. Its flexibility enables to outperforms PGA, and its geometric prior allows to outperforms VAE. It also outperforms the projected VAE, although the difference in performances is less significative. Projected VAEs might be interesting for applications that do not require an intrinsic probabilistic model on the Riemannian manifold.

\section{Experiments on brain connectomes}\label{sec:spd}

In the last section, we turn to the question that has originated this study: do brain connectomes belong to a submanifold of the $\text{SPD}(N)$ manifold? We compare the methods of PCA, PGA, VAE and rVAE on resting-state functional brain connectome data from the ``1200 Subjects release" of the Human Connectome Project (HCP) \cite{VanEssen2013TheDavid}. We use $n=812$ subjects each represented by a $15\times15$ connectome. Details on the dataset are provided in the supplementary materials.

%For the Riemannian VAE, we choose $O = Id_{n_{nodes} \times n_{nodes}}$, such that $\text{Log}_O$ and $\text{Exp}_O$ correspond to the matrix logarithm and the matrix exponential.

The VAE represents the brain connectomes as elements $x$ of the vector space of symmetric matrices and is trained with the Frobenius metric. In contrast, the Riemannian VAE represents the brain connectomes as elements $x$ of the manifold $\text{SPD}(N)$, which we equip with the Riemannian Log-Euclidean metric. We chose equivalent neural network architectures for both models. Details on the architectures and the training are provided in the supplementary materials. We perform a grid search for the dimension of the latent space over $L \in \{10, 20, 40, 60, 80, 100\}$. The latent dimension $L$ controls the dimension of the learned submanifold, as well as the model's flexibility. 

%This is equivalent to considering the elements $\log(x)$ and the Euclidean metric. 

%In both cases, we consider the same architecture for the function $f_\theta$ defining models ~\ref{eq:vae} and \ref{eq:rvae}. As a consequence, both models have the same number of parameters. Details are in the supplementary materials. The networks are trained with the VAE and the Riemannian VAE. The training hyperparameters are: batch size of 8, 160 epochs, learning rate of $1e-5$. We use the Adam optimizer with parameters $\beta_1 = 0.5, \beta_2 = 0.999$. The weights of the neural networks are initialized with Xavier initialization \cite{Glorot2010UnderstandingNetworks}. We use $M=1$ Monte-Carlo samples for the ELBO, $z \sim q_\phi(z|x)$. 

\begin{figure}[h]
\centering
\begin{center}
\includegraphics[page=4,height=4.3cm]{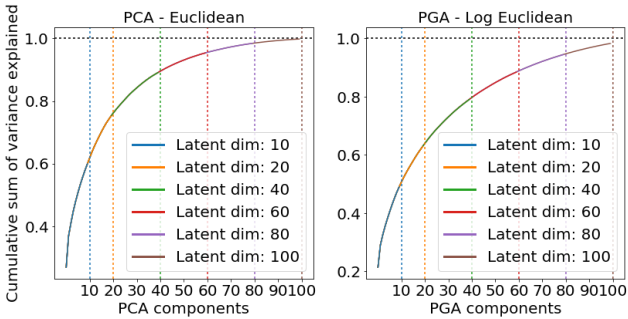}
\end{center}
\caption{Cumulative sum of variance captured by the principal components, for Principal Component Analysis (left) and Principal Geodesic Analysis (right).}
\label{fig:spd_comparison_linear}
\end{figure}

Results from PCA and PGA do not reveal any lower-dimensional subspace, see Figure~\ref{fig:spd_comparison_linear}. Figure~\ref{fig:spd_comparison} shows the results of VAE and rVAE. Both methods use only ~5 components from their latent space, even when $L$ is large. In the ambient space, they do not capture more than 34\% of the variance. Future work will investigate if this represents a feature of the connectomes space, truly equipped with a 5D nonlinear submanifold that represents \%30 of the variability, or if this is a failure mode of the rvAE and VAE.

\begin{figure}[h]
\centering
\begin{center}
\includegraphics[page=5,height=4.3cm]{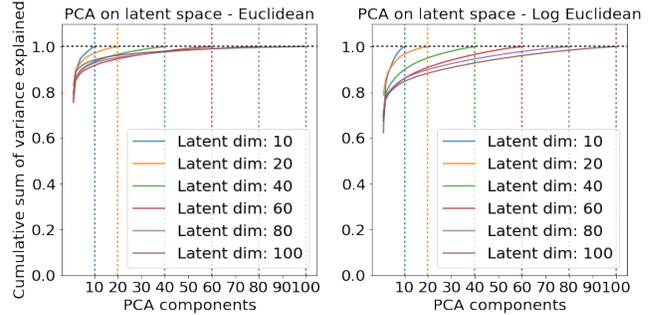}
\end{center}
\caption{Cumulative sum of variance captured by the principal components within the latent space, for the VAE (left); Right: Riemannian VAE (right).}
\label{fig:spd_comparison}
\end{figure}

\section{Conclusion}

We introduced the Riemannian variational autoencoder (rVAEs), which is an intrinsic generalization of VAE for data on Riemannian manifolds and an extension of probabilistic principal geodesic analysis (PPGA) to nongeodesic submanifolds. The rVAE variational inference method allows performing approximate, but faster, inference in PPGA. We provided theoretical and experimental results on rVAE using the formalism of weighted submanifold learning.

%by allowing to learn submanifolds that are not necessarily geodesic at a point. We have provided theoretical results describing the flexibility of the Riemannian VAE model. Then, we have performed a geometric analysis of both VAEs and Riemannian VAEs in terms of weighted submanifold learning.

%We have shown that the algorithms are not consistent in the sense that the learned submanifold does not converge to the true submanifold that it is designed to estimate, in the sense of the 2-Wasserstein distance on the weighted submanifolds, when the sample size goes to infinity. We have illustrated this effect on a 1D theoretical example and on synthetic 2D datasets, while offering a correction method. Then, we have compared the accuracy of the Riemannian VAE with other submanifold learning methods, and also non-intrinsic ones. We have shown that the geometric prior of the Riemannian VAE allows it to outperform other methods. Lastly, we have applied the Riemannian VAE on a real dataset of resting-state brain connectomes.

\begin{comment}

\section{Acknowledgements}

"Data were provided [in part] by the Human Connectome Project, WU-Minn Consortium (Principal Investigators: David Van Essen and Kamil Ugurbil; 1U54MH091657) funded by the 16 NIH Institutes and Centers that support the NIH Blueprint for Neuroscience Research; and by the McDonnell Center for Systems Neuroscience at Washington University."
\end{comment}

{\small
\bibliographystyle{ieee_fullname}
\bibliography{references}
}

\end{document}